\theoremstyle{definition}
\newtheorem{defn}{Definition}
\newtheorem{problem}{Problem}
\newtheorem{example}{Example}
\newtheorem{prop}{Proposition}
\title{\LARGE \bf SafeGuardPF: Safety Guaranteed Reactive Potential Fields for Mobile Robots in Unknown and Dynamic Environments*
}
\author{Rafael Rodrigues da Silva$^{1,2}$, Samuel Silva$^{1,2}$, Grigoriy Dubrovskiy$^{1}$ and Hai Lin$^{1}$% <-this % stops a space
\thanks{*The financial supports from NSF-CNS-1239222, NSF-CNS-1446288  and NSF- EECS-1253488 for this work are greatly acknowledged.}% <-this % stops a space
	\thanks{$^{1}$ All authors are with Department of Electrical Engineering, University of Notre Dame, Notre Dame, IN 46556, USA.{\tt\small (rrodri17@nd.edu;~ssilva1@nd.edu;} {\tt\small ~gdubrovs@nd.edu;~hlin1@nd.edu)}}
\thanks{$^{2}$ The first and second authors would like to appreciate the scholarship support by CAPES/BR, BEX 13242/13-0}
}
\begin{document}

\maketitle
\thispagestyle{empty}
\pagestyle{empty}

%%%%%%%%%%%%%%%%%%%%%%%%%%%%%%%%%%%%%%%%%%%%%%%%%%%%%%%%%%%%%%%%%%%%
\begin{abstract}
An autonomous navigation with proven collision avoidance in unknown and dynamic environments is still a challenge, particularly when there are moving obstacles. A popular approach to collision avoidance in the face of moving obstacles is based on model predictive algorithms, which, however, may be computationally expensive. Hence, we adopt a reactive potential field approach here. At every cycle, the proposed approach requires only current robot states relative to the closest obstacle point to find the potential field in the current position; thus, it is more computationally efficient and more suitable to scale up for multiple agent scenarios. Our main contribution here is to write the reactive potential field based motion controller as a hybrid automaton, and then formally verify its safety using differential dynamic logic. In particular, we can guarantee a passive safety property, which means that collisions cannot occur if the robot is to blame, namely a collision can occur only if the robot is at rest. The proposed controller and verification results are demonstrated via simulations and implementation on a Pioneer P3-AT robot.
\end{abstract}

%%%%%%%%%%%%%%%%%%%%%%%%%%%%%%%%%%%%%%%%%%%%%%%%%%%%%%%%%%%%%%%%%%%
\section{Introduction}

%1. Relevance of the robot motion planning considering moving obstacles.\\
The autonomous navigation of mobile robots in unknown environments is a complex task that still has much room for improvement. To be autonomous, a robot must be able to interpret what is being read of the environment and associate with its previous knowledge of the place to take actions towards the goal in an effective manner \cite{hamani2012mobile}. Furthermore, the robotic motion planning in unknown and dynamic environments can be applied to several areas, such as: intelligent vehicles on highways, air and sea traffic control, automated assembly and animation \cite{shiller2001motion}. 

%2. The robot motion planning for moving obstacles state of art.\\
Existing motion planning approaches can be roughly divided into two categories: global path planning and local path planning \cite{raja2012optimal}. A global path planning, such as symbolic motion planning\cite{lin2014mission}, is usually based on a discretization of the working-space with an assumption of a static and known environments. On the other hand, a local path planner relies on local sensing and reaction to avoid collisions in possibly unknown environments. Typical examples include the potential field based navigation \cite{ge2000new}, dynamic window approaches \cite{thrun1997dynamic}, and sampling-based approaches \cite{raja2012optimal}. 

%3. The potential fields state of art.\\ - in the back up

%4. The formal approach for motion planning state of art.\\

Although these existing local path planning methods are widely applied and effective to handle uncertain and dynamic environments, they often lack formal safety guarantees. However, in many safety critical applications, such as the neutralization of mass destruction weapons or clean-up in a nuclear power plant meltdown, guaranteed safety cannot be overemphasized. Motivated by this gap, recent years have seen an increase of research activities on introducing formal methods into robotic motion planning. For example, the PASSVOID \cite{bouraine2012provably} computes an online verification algorithm which searches for safe controls that lead a mobile robot to avoid braking inevitable collision states (ICS). A braking ICS is a variant of the concept of ICS (inevitable collision states) \cite{fraichard2004inevitable} in which are considered the braking trajectories and the collision will not occur in the future when the vehicle is not at rest. The approach proposed by \cite{althoff2012safety}, \cite{althoff2014online} and \cite{hess2014formal} require the computation of a set of reachable trajectories. In \cite{althoff2012safety}, the authors proposed a probabilistic approach to rank these trajectories. In \cite{althoff2014online} and \cite{hess2014formal}, the authors considered nonlinear controllers and bounded disturbances. These methods require searching for safe trajectories online, and hence can be computationally expensive.

Differently, in \cite{mitsch2013provably} and \cite{mitsch2016formal}, the authors presented an offline verification in differential dynamic logic d$\mathcal{L}$ \cite{platzer2010logical} to guarantee a passive or passive friendly safety of motion planning for ground vehicles. A passive or passive friendly safety property is a specification for motion planning which considers that ICS-free maneuvers cannot be guaranteed because of the limited sensors range and elusive nature of the future \cite{macek2009towards}. Thus, the proposed verification considered two safety properties: passive safety, in which it was assumed to know only the maximum velocity of the moving obstacles; and passive friendly safety, in which was also assumed to know the obstacle braking power and delay to start braking. Although the offline verification allows avoiding online computation of the safe trajectories, it was applied for a Dynamic Window Approach (DWA) algorithm, which requires finding an optimized trajectory towards a goal position, and it can also be computationally costly. 

In this paper, we propose to adopt the reactive potential field approach for mobile robots. 
Artificial Potential Fields were first proposed in \cite{khatib1986real}, where the obstacles were interpreted as repulsion fields and the goal as an attraction field, and the path was generated by following the optimal trajectory along the intersection of all fields in the environment. Artificial Potential Fields have been widely adopted in robotic motion planning, see e.g. \cite{sun2016flight}, \cite{budhraja2016approach}, \cite{matoui2015local} and the references therein.  Our main goal is to formally prove that motion planning based on artificial potential fields is passive safe. Technically, we first model the controller based on potential fields as a hybrid automaton, and then 
use differential dynamic logic to formally verify the safety of controlled robotic motion dynamics. Its passive safety is achieved by leveraging results from [14,15] that give a constraint to the state variable that ensures the passive safety. In fact, we find the maximum safe velocity with this constraint. Thus, the potential fields are designed such that the robot heading points towards maximize this velocity while pointing towards the destination. The proposed  approach was implemented on a Pioneer P3-AT robot, and simulation and experimental results are presented. Since this approach guarantees a safety property for moving obstacles, we call it SafeGuardPF (Safety Guaranteed reactive Potential Field). 

In summary, the contribution of this work is to propose a safety guaranteed reactive potential field for moving obstacles. This potential field approach is formally proven to ensure a passive safety while being computationally efficient which, to the best of our knowledge, has not been attempted before:
\begin{itemize}
	\item Unlike \cite{buniyamin2011simple}, \cite{vseda2007roadmap} and \cite{thrun1997dynamic}, where the motion plan is formally verified for static obstacles, in our approach, the safety is ensured for moving obstacles. 
    \item Unlike \cite{vadakkepat2000evolutionary} and \cite{yin2009new}, where potential field approaches were presented without safety verification, in our approach, the safety is formally proved.
    \item Unlike \cite{ge2002dynamic}, where a potential field approach for moving obstacles is proposed, the SafeGuardPF does not require full knowledge of those obstacles dynamics but only the maximum velocity. Further, the approach presented in this work is reactive, meaning that it takes into account the current sensor readings every control cycle. Thus, it is robust to a partially unknown and dynamic environment.
    \item Unlike the approaches based on a verification presented in \cite{bouraine2012provably}, \cite{althoff2012safety}, \cite{althoff2014online} and \cite{hess2014formal}, where uses an online verification, in this approach, the state constraints that ensures the safety property is designed and verified offline. Thus, less online computation is required to guarantee the safety.
    \item Unlike \cite{mitsch2013provably} and \cite{mitsch2016formal}, which implement a formally proven DWA approach for moving obstacles, the potential field approach proposed here does not require online numerical optimization algorithms. Thus, it can be more computationally efficient.
\end{itemize}

The rest of the paper is organized as follows. Section \ref{sec:preliminaries} presents background information on potential fields. Section \ref{sec:system} presents the scenario and problem formulation of this work. Section \ref{sec:approach} describes in details the proposed potential field approach. Simulation and experimental results are presented in Section \ref{sec:results}. Finally, Section \ref{sec:conclusion} concludes the paper.

\section{Preliminaries}\label{sec:preliminaries}

\subsection{Potential Fields}

 As originally proposed in \cite{khatib1986real}, the philosophy of the Artificial Potential Fields (APF)  can be schematically described for mobile robot navigation as the agent moving in a field of forces where the position to be reached is an attractive pole and the obstacles are repulsive surfaces for the robot.
    In general, the field is a composition of this two fields
\begin{equation}\label{eq:totalpotential}
\vec{\mathcal{U}}_{tot} = \vec{\mathcal{U}}_{att}+\vec{\mathcal{U}}_{rep}. \\
\end{equation}

Each field function can be defined in order to favor the movement of robots according to their own physical limitations or environmental limitations. Even though there are several modifications from the original potential functions proposed on \cite{khatib1986real}, they all have some common parameters like $K_{rep}$, $K_{att}$ and $p_0$ that define the variation of the field as well as its influence radius. In order to obtain the expression for the virtual force to which the agent is subject along the trajectory, the gradient of each field can be used, as seen in Equations (\ref{eq:attforce}) and (\ref{eq:repforce}) as below:
\begin{align}\label{eq:attforce}
\vec{\mathcal{F}}_{att} = -\nabla\vec{\mathcal{U}}_{att}\\\label{eq:repforce},
\vec{\mathcal{F}}_{rep} = -\nabla\vec{\mathcal{U}}_{rep}.
\end{align}

The attraction field should generate a vector field point towards the target in a manner that the further the agent is from the target, the bigger the attraction force. It was first proposed in \cite{khatib1986real} that the attraction field can be described as in Equation (\ref{eq:attfield}).
\begin{equation}\label{eq:attfield}
\vec{\mathcal{U}}_{att} = 1/2*K_{att}*(\vec{{u}}_{rob}-\vec{{u}}_{goal}), \\
\end{equation}
where $K_{att}$ is the attraction gain, responsible for adjusting the intensity of the field, and the difference of the vectors $\vec{{u}}_{rob}$ and $\vec{{u}}_{goal}$ represents the distance between the robot and its goal.

On the other hand, the repulsive field should generate forces that would push the robot to the opposite direction. In general each obstacle has an radius of influence and the field increases its repulsion intensity as decreases the distance between the obstacle and the robot. One of the most common repulsion field functions is shown below:
\begin{align} \label{eq:repfield}
\vec{\mathcal{U}}_{rep} = & \begin{cases} 
1/2*\eta*(\frac{1}{\rho}-\frac{1}{\rho_0})^2 \text{,} & \text{if } \rho \leq \rho_0. \\
0 \text{,} & \text{otherwise,}
\end{cases} 
\end{align}
where $\eta$ is the repulsive gain, $\rho$ is the shortest distance between the obstacle and the agent, and $\rho_0$ is the limit distance that defines the influence range of the obstacle.

    Some methods have been proposed to obtain the coefficients of these equations, but as mentioned in \cite{savkin2015safe}, some artificial potential field methods have been extended to moving obstacles, though without rigorous justification. So there is no guarantee that they will always generate a safe trajectory.

\section{System Model and Problem Formulation}\label{sec:system}
This work considers a mobile ground robot whose workspace is unknown and dynamic, and the obstacles can be moving up to a known maximum velocity $V$. This robot can realize forward circular trajectories defined by translational $v: v \geq 0$ and angular velocities $\omega$ that are controlled by a regulator. The translational acceleration $a$ and the angular velocity are assumed to be bounded with known bounds (i.e. $-b \leq a \leq A: b > 0 \wedge A \geq 0$ and $-\Omega \leq \omega \leq \Omega : \Omega \geq 0$). This regulator changes its actuators signals cyclically with a non-deterministic period with known maximum value $\epsilon$. Therefore, every control cycle, a pair of translational and angular velocity $\langle v^*, \omega^* \rangle$ defines the mobile robot trajectory to be realized for a short range time $\epsilon$. Several types of vehicles can realize this trajectory, such as differential drive, Ackermann drive, single wheel drive, synchro drive, or omni drive vehicles \cite{braunl2008embedded}. It is also assumed that the robot has a sensor to measure its relative distance to the obstacles in its environment. Since potential field approaches can lead to local minima for a certain configuration of obstacles and destinations, it is assumed to have a supervisory that provides a set of waypoints that describes a deadlock free trajectory to the destination. This trajectory can be found, for example, by using a SLAM (Simultaneous Localization And Mapping) system with an A$^*$ algorithm \cite{duchovn2014path}. Furthermore, the distance measured with this sensor is adjusted to consider the robot shape and kinematics as presented in \cite{minguez2006abstracting}.

The system model abstracts the robot as a punctual and omnidirectional vehicle because we assume that the measured distance to an obstacle is adjusted to the robot shape and kinematics. Thus, the differential equations that model the robot trajectory are: $x^{\prime}=v \cdot \cos \theta, y^{\prime} = v \cdot \cos \theta, v^{\prime}=a, \theta^{\prime}=\omega, \omega^{\prime}=\frac{a}{r_c}$, where $x$ and $y$ is the  the two dimensional position of the robot, $\theta$ is the robot heading angle, $r_c$ is the circular trajectory radius such as $r_c = \frac{v_0}{\omega^*}$ and $v_0$ is initial translational velocity. Now, we can formulate our problem as follows.

\begin{problem}
Given the maximum acceleration $A \geq 0$, deceleration $b > 0$, angular velocity $\Omega \geq 0$ and controller cycle time $\epsilon > 0$, find a trajectory defined by a pair of translational and angular velocity $\langle v^*, \omega^* \rangle$ that optimizes the robot translational velocity towards a goal waypoint and ensures the passive safety.
\end{problem}

\subsection{Verification}
In \cite{mitsch2013provably} and \cite{mitsch2016formal} the authors verified a hybrid control system modeled as a d$\mathcal{L}$ hybrid program in Model 2 \cite{mitsch2016formal}, which abstracts a motion planning that ensures the passive safety. The transition system that represents this control system is shown in Fig.~\ref{fig:controlenvelope}, where the $Drive$ mode allows the regulator to take any transnational acceleration $a$ in its domain (i.e. $-b \leq a \leq A$) and the robot must brake (i.e. $a = -b$) in the $Brake$ mode. The initial constraint $\phi_{ps}$ is that the robot start at rest (i.e. $\phi_{ps} \equiv v = 0$). The safe constraint $safe$ that ensures the passive safety $\psi_{ps}$ after any finite nondeterministic executions of the control system.
\begin{align}
	\psi_{ps} \equiv & v \neq 0 \rightarrow d > 0 \\
    safe \equiv & \frac{d}{\sqrt{2}} > \frac{v^2}{2b} + V\frac{v}{b} + \Big(\frac{A}{b}+1\Big)\Big(\frac{A}{2}\epsilon^2 + \epsilon(v+V)\Big)\label{eq:safe},
\end{align} where $d$ is the euclidean distance to the closest obstacle point. 

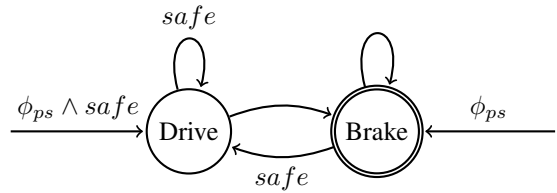
\begin{figure}[t]
    \centering
    \begin{tikzpicture}[shorten >=1pt,node distance=2.5cm,on grid,auto, bend angle=20, thick,scale=1, every node/.style={transform shape}]
    \node (s_0_1) {};
    \node[state] (s_2) [right=of s_0_1] {Drive};
    \node[state,accepting] (s_3) [right=of s_2] {Brake};
    \node (s_0_2) [right=of s_3] {};
    \path[->]
    (s_0_1) edge node [pos=0.5, sloped, above]{$\phi_{ps} \wedge safe$} (s_2)
    (s_0_2) edge node [pos=0.5, sloped, above]{$\phi_{ps}$} (s_3)
    (s_2) edge [loop above] node [pos=0.5, sloped, above]{$safe$} (s_2)
    (s_3) edge [loop above] node [pos=0.5, sloped, above]{} (s_3)
    (s_2) edge [bend left] node [pos=0.5, above]{} (s_3)
    (s_3) edge [bend left] node [pos=0.5, below]{$safe$} (s_2);
    \end{tikzpicture}
    \caption{Transition system that represents the hybrid control system verified in \cite{mitsch2013provably,mitsch2016formal}.}
    \label{fig:controlenvelope}
\end{figure}

\section{Certified Reactive Potential Field}\label{sec:approach}
The main contribution of the current work is to provide a reactive potential field approach SafeGuardPF for mobile robots to avoid moving obstacles, which is formally verified to be passively safe. Formal guarantee is obtained ensuring that the robot controls are constrained by the hybrid control system presented in the Fig.~\ref{fig:controlenvelope}. This control system was proved to guarantee the passive safety property in \cite{mitsch2013provably,mitsch2016formal}. The SafeGuardPF approach itself consists of two independently calculated components: Translational and Rotational Behaviors (see Fig.~\ref{fig:Behaviors}, where $v_i$ represent translational velocities and $\omega_i$ represent rotational speed). Intuitively speaking, this logic is similar to the logic of a car driver: increase/decrease speed and rotation of the wheel.

\begin{figure}[t]
    \centering
    \includegraphics[width=0.9\linewidth]{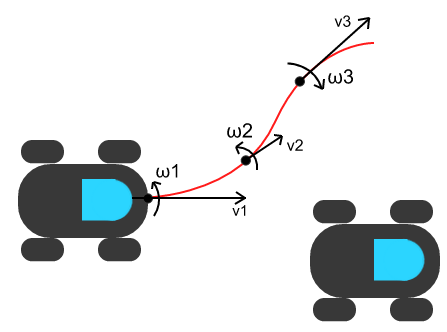}
    \caption{Translational and Rotational Behaviors.}
    \label{fig:Behaviors}
    
\end{figure}

Translational Behavior is responsible for maximizing translational velocity while keeping it not beyond the value, for which the equation (\ref{eq:safe}) holds. This equation gives an upper bound for safe distance to a moving obstacle with a maximum velocity equal to $V$. If we solve this equation for the translational velocity, we find the maximum safe translational velocity $v_{max}$.

\begin{prop}\label{prop:vmax}
The maximum velocity that the robot can assign any value to acceleration $a$ such that $-b \leq a \leq A$ is,
\begin{equation}\label{eq:vsafe}
    v < v_{max} = b \sqrt{\Big(\frac{A}{b} + 1\Big) \epsilon^2 + \frac{V^2}{b^2} + \sqrt{2}\frac{d}{b}} - V - \Big(\frac{A}{b} + 1\Big) \epsilon b
\end{equation}
\end{prop}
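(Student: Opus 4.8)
The plan is to treat the passive-safety constraint~(\ref{eq:safe}) purely as a quadratic inequality in the single unknown $v$, with $d$, $b$, $V$, $A$, $\epsilon$ regarded as fixed parameters ($b>0$, the others $\geq 0$, and $d>0$ in the regime of interest), and to solve it in closed form. First I would clear the denominator: multiplying both sides of~(\ref{eq:safe}) by $2b>0$ and expanding the product $\bigl(\tfrac{A}{b}+1\bigr)\bigl(\tfrac{A}{2}\epsilon^2+\epsilon(v+V)\bigr)$ puts the constraint into the standard shape $v^2 + 2\beta v + \gamma < \sqrt{2}\,bd$, where $\beta = V + b\bigl(\tfrac{A}{b}+1\bigr)\epsilon$ is the coefficient collected from the two linear-in-$v$ contributions $2Vv$ and $2b\bigl(\tfrac{A}{b}+1\bigr)\epsilon v$, and $\gamma = bA\bigl(\tfrac{A}{b}+1\bigr)\epsilon^2 + 2b\bigl(\tfrac{A}{b}+1\bigr)\epsilon V$ gathers the remaining $v$-free terms.

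Next I would complete the square, writing $v^2 + 2\beta v = (v+\beta)^2 - \beta^2$, so that the inequality becomes $(v+\beta)^2 < \sqrt{2}\,bd + \beta^2 - \gamma$. The only step needing care is verifying that $\beta^2-\gamma$ collapses: the cross term $2Vb\bigl(\tfrac{A}{b}+1\bigr)\epsilon$ inside $\beta^2$ cancels the identical term in $\gamma$, and the elementary identity $b\bigl(\tfrac{A}{b}+1\bigr)=A+b$ reduces $b^2\bigl(\tfrac{A}{b}+1\bigr)^2\epsilon^2 - bA\bigl(\tfrac{A}{b}+1\bigr)\epsilon^2$ to $b^2\bigl(\tfrac{A}{b}+1\bigr)\epsilon^2$. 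This leaves the clean bound $(v+\beta)^2 < R$ with $R := \sqrt{2}\,bd + V^2 + b^2\bigl(\tfrac{A}{b}+1\bigr)\epsilon^2$, and $R>0$ because every summand is nonnegative while $\sqrt{2}\,bd>0$.

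Since $R>0$, the inequality $(v+\beta)^2<R$ is equivalent to $-\sqrt{R}<v+\beta<\sqrt{R}$; as $v\geq 0$ and $\beta\geq 0$ the left inequality holds automatically, so the passive-safety guard is equivalent to the single upper bound $v<\sqrt{R}-\beta$. Factoring $b^2$ out of the radical gives $\sqrt{R}=b\sqrt{\bigl(\tfrac{A}{b}+1\bigr)\epsilon^2+\tfrac{V^2}{b^2}+\sqrt{2}\tfrac{d}{b}}$, and substituting $\beta$ back produces exactly~(\ref{eq:vsafe}); hence $v_{max}$ as defined there is precisely the threshold below which $safe$ holds, i.e.\ below which the regulator may remain in the $Drive$ mode of Fig.~\ref{fig:controlenvelope} and is free to assign any $a\in[-b,A]$. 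The whole argument is elementary algebra, so the main (and only real) obstacle is the bookkeeping in the completion-of-square step; I would double-check the cancellations symbolically, and note in passing the feasibility proviso that the bound is non-vacuous (i.e.\ $v_{max}>0$, equivalently $R>\beta^2$) only when $d$ is large enough, which is exactly the situation in which the $safe$ guard fails and the $Brake$ branch of Fig.~\ref{fig:controlenvelope} takes over.
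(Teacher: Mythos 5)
Your proposal is correct and follows essentially the same route as the paper: both treat the $safe$ constraint as a quadratic inequality in $v$ with the other quantities as parameters, reduce it to the same discriminant-type quantity (your $R$ equals $b^2\Delta$ in the paper's notation), discard the negative root using $v\geq 0$ and $\beta\geq 0$, and factor $b^2$ out of the radical to obtain (\ref{eq:vsafe}). The only difference is cosmetic -- you complete the square where the paper invokes the quadratic formula -- and your closing remark about the bound being vacuous for small $d$ (the $Brake$ branch) is a fair observation the paper leaves implicit.
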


\begin{proof}
First, the Equation (\ref{eq:safe}) is rearranged as follows,
\begin{align*}
& \frac{d}{\sqrt{2}} > \frac{v^2}{2b} + V\frac{v}{b} + \Big(\frac{A}{b}+1\Big)\Big(\frac{A}{2}\epsilon^2 + \epsilon(v+V)\Big) \\
\Rightarrow & \frac{v^2}{2b} + \Big(\frac{V}{b} + \frac{A}{b}\epsilon+\epsilon\Big)v + \Big(\frac{A}{b}+1\Big)\Big(\frac{A}{2}\epsilon^2 + \epsilon V\Big) - \frac{d}{\sqrt{2}} \\
& < 0.
\end{align*}
Let $\alpha$, $\beta$ and $\gamma$ be the coefficients of a quadratic equation, thus,
\begin{align*}
& \alpha = \frac{1}{2b} \\
& \beta = \frac{V}{b} + \Big(\frac{A}{b} + 1\Big) \epsilon \\
& \gamma = \Big(\frac{A}{b}+1\Big)\Big(\frac{A}{2}\epsilon^2 + \epsilon V\Big) - \frac{d}{\sqrt{2}} \\
\Rightarrow & \alpha v_l^2 + \beta v_l + \gamma < 0
\end{align*}
Since $\alpha > 0$, $\beta > 0$ and $v \geq 0$, $\frac{-\beta - \sqrt{\Delta}}{2 \alpha} < 0$ and, 
\begin{equation*}
0 \leq v < \frac{-\beta + \sqrt{\Delta}}{2 \alpha},
\end{equation*} such that,
\begin{align*}
\Delta = & \beta^2 - 4 \alpha \gamma \\
= & \Big(\frac{V}{b} + \frac{A}{b}\epsilon+\epsilon\Big)^2 - 4 \frac{1}{2b} \Big[\Big(\frac{A}{b}+1\Big)\Big(\frac{A}{2}\epsilon^2 + \epsilon V\Big) - \frac{d}{\sqrt{2}}\Big]  \\
= & \frac{V^2}{b^2} + 2\Big(\frac{A}{b} + 1\Big)\epsilon\frac{V}{b} + \Big(\frac{A^2}{b^2}+2\frac{A}{b}+1\Big)\epsilon^2  \\
& - \Big(\frac{A^2}{b^2} + \frac{A}{b}\Big) \epsilon^2 -2 \Big(\frac{A}{b}+1\Big) \epsilon\frac{V}{b}+ \sqrt{2}\frac{d}{b} \\
= & \Big(\frac{A}{b} + 1\Big) \epsilon^2 + \frac{V^2}{b^2} + \sqrt{2}\frac{d}{b} 
\end{align*}

Therefore,
\begin{equation*}
0 \leq v < b \sqrt{\Big(\frac{A}{b} + 1\Big) \epsilon^2 + \frac{V^2}{b^2} + \sqrt{2}\frac{d}{b}} - V - \Big(\frac{A}{b} + 1\Big) \epsilon b.
\end{equation*}
\end{proof}

This equation is a function of the robot's upper bound on speed and relative position to the closest obstacle. Intuitively speaking, translational speed should be not greater than the bound for being able to come to a complete stop before the collision in the worst case scenario (if the closest obstacle would be approaching towards the controlled robot with a speed \textit{V}). 

\begin{prop}\label{prop:vstar}
The desired translational velocity $v^*$ which assigns the maximum safe desired velocity is,
\begin{equation}
v^* = \begin{cases}
v_{max} - \delta + A\epsilon, & \text{if } safe \text{ holds true}, \\
0, & \text{otherwise},
\end{cases}
\end{equation}where $\delta$ is a precision constant.
\end{prop}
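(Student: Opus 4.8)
The plan is to establish the two branches of the formula separately and then argue that the closed loop they produce is an instance of the hybrid automaton of Fig.~\ref{fig:controlenvelope}, so that passive safety is inherited from \cite{mitsch2013provably,mitsch2016formal} rather than re-proved from scratch.

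First I would treat the case in which $safe$ holds at the start of a control cycle. By Proposition~\ref{prop:vmax}, $safe$ is equivalent to $v < v_{max}$ for the current closest-obstacle distance $d$, and in that regime the regulator may select any translational acceleration $a \in [-b,A]$ without leaving the $Drive$ mode (the $safe$-guarded self-loop in Fig.~\ref{fig:controlenvelope}). Since the translational behavior must maximize forward speed, the regulator should request the largest velocity still compatible with this ceiling. The only influence a request $v^{*}$ can have over one cycle is to move the actual velocity by at most $A\epsilon$ (acceleration bounded by $A$, cycle length bounded by $\epsilon$); hence, to be sure the velocity reached at the end of the cycle can actually climb up to the safe ceiling, the request is offset by this one-cycle increment $A\epsilon$, and a precision margin $\delta>0$ is subtracted so that the strict inequality $v<v_{max}$ --- equivalently $safe$, equivalently the invariant $\psi_{ps}$ --- is maintained rather than merely reached. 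This produces $v^{*}=v_{max}-\delta+A\epsilon$.

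Next I would treat the case in which $safe$ fails. Then Proposition~\ref{prop:vmax} says no admissible $v\ge 0$ satisfies (\ref{eq:safe}), so the state lies in (or must transition into) the $Brake$ mode of Fig.~\ref{fig:controlenvelope}, whose only admissible action is $a=-b$. Requesting $v^{*}=0$ forces the regulator to decelerate at the maximal rate $-b$, which realizes exactly that $Brake$ action and is unconditionally safe because braking is always enabled in the verified automaton; as soon as $safe$ becomes true again the first branch takes over, matching the $Brake\to Drive$ transition.

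Finally I would close the argument by checking that in both branches the acceleration the regulator applies while tracking $v^{*}$ stays inside $[-b,A]$ and that the mode switching induced by re-evaluating $safe$ every cycle is exactly the transition relation of Fig.~\ref{fig:controlenvelope}; passive safety then follows from the d$\mathcal{L}$ proof of that automaton. The hard part, I expect, is precisely this last step: justifying rigorously that an intentionally aggressive request $v^{*}>v_{max}$ does not compromise safety --- i.e., that the per-cycle test of $safe$ together with the $\delta$ and $A\epsilon$ offsets is exactly what guarantees that the worst-case one-cycle velocity increase never carries the state out of the region from which (\ref{eq:safe}) can still be restored by braking --- and, dually, arguing that no larger choice of $v^{*}$ keeps the robot permanently in $Drive$, which is the sense in which $v^{*}$ is the maximum safe desired velocity.
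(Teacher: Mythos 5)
Your proposal follows essentially the same route as the paper's own proof: invoke the verified control envelope of Model 2 in \cite{mitsch2016formal} (Fig.~\ref{fig:controlenvelope}), argue that when $safe$ holds any acceleration in $[-b,A]$ is admissible so the request should be the maximum safe velocity for the next cycle, $v_{max}-\delta+A\epsilon$ (with $\delta$ compensating the strict inequality of Proposition~\ref{prop:vmax}), and that when $safe$ fails the robot must brake, giving $v^{*}=0$. Your added remarks on the one-cycle increment $A\epsilon$ and on checking that the closed loop instantiates the verified automaton are elaborations of, not departures from, the paper's (quite informal) argument, so the proposal is correct and matches the paper's approach.
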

\begin{proof}
    From Model 2 \cite{mitsch2016formal}, if a translational velocity $v$ satisfies the condition $safe$ for a given position and parameters, then the acceleration can be any value between $-b$ and $A$. Thus, if $safe$ holds true, the robot is allowed to accelerate up to $A$, and the desired velocity is the maximum safe velocity for the next control cycle which is $v_{max} - \delta + A \epsilon$. The precision constant $\delta$ is added to address the strict bound on the safe velocity. Otherwise, the robot must brake, and the desired velocity is zero.
\end{proof}

Meanwhile, Rotational Behavior is responsible for heading towards the destination, while avoiding obstacles. The force towards the destination is called attraction force and its resulting field is the attraction field $\vec{\mathcal{F}}_{att}$, which is calculated based on the angle $\beta$ towards the goal and attraction coefficient $K_{att}$.
\begin{align}
\vec{\mathcal{F}}_{att} = & K_{att}\frac{\vec{{u}}_{rob}-\vec{{u}}_{goal}}{\parallel \vec{{u}}_{rob}-\vec{{u}}_{goal} \parallel} = K_{att} \cdot \vec{u}_{att} \\
\vec{u}_{att} = & \langle \cos \beta, \sin \beta \rangle \\
\beta = & \angle (\vec{{u}}_{rob}-\vec{{u}}_{goal}),
\end{align}where $\vec{{u}}_{rob}$ and $\vec{{u}}_{goal}$ are vectors representing the robot and destination position, respectively. This force is constant, meaning that the robot will turn towards the object with the same velocity independent of how far it is to the destination. 

The maximum translational velocity is proportional to $v_{max}$, and this velocity is an increasing function of distance to the closest obstacle point. Thus, it is possible to maximize the translational velocity if the desired angular velocity $\omega^*$ points towards the gradient of the maximum safe velocity, i.e. $\bigtriangledown v_{max}$. Hence, it is defined a force that maximizes the translational velocity and, consequently, avoids the obstacles which is called repulsion field $\vec{\mathcal{F}}_{rep}$. This field is calculated based on the distance $d$ and angle $\alpha$ to the closest obstacle and the parameter repulsion coefficient $K_{rep}$. The Fig.~\ref{fig:grad_square} illustrates this repulsion field for a square obstacle and a mass point robot.

\begin{prop}
The potential field which drives the robot to the maximum safe velocity $v_{max}$ is
\begin{align}
\vec{\mathcal{F}}_{rep} = & K_{rep} \frac{\partial v_{max}}{\partial d} \vec{u}_{obs}, \\
\vec{u}_{obs} = & \langle \cos \alpha, \sin \alpha \rangle.
\end{align}
Such that,
\begin{align}
\frac{\partial v_{max}}{\partial d} = & \frac{1}{\sqrt{\Big(\frac{A}{b} + 1\Big) \epsilon^2 + \frac{V^2}{b^2} + \sqrt{2}\frac{d}{b}}},
\end{align}where $x_o$ and $y_o$ are the 2-dimensional position of the closest obstacle point and maximum gradient $\bigtriangledown_{v_{max}}^{(\max)}$ is a parameter that defines the maximum value that the gradient $\bigtriangledown_{v_{max}}$ can take. 
\end{prop}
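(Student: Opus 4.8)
The plan is to recognize the claimed repulsion field as a scaled gradient‑ascent direction for the safe‑speed map $v_{max}$, and then to compute that gradient explicitly. The first step is to note that, by Proposition~\ref{prop:vmax}, $v_{max}$ depends on the robot configuration only through the Euclidean distance $d$ to the closest obstacle point; writing $d=\sqrt{(x-x_o)^2+(y-y_o)^2}$ and applying the chain rule gives
\[
\nabla v_{max} \;=\; \frac{\partial v_{max}}{\partial d}\,\nabla d ,
\qquad
\nabla d \;=\; \Big\langle \tfrac{x-x_o}{d},\ \tfrac{y-y_o}{d}\Big\rangle ,
\]
which is a unit vector along the robot--obstacle line; up to the orientation convention chosen for the bearing $\alpha$ this is exactly $\vec{u}_{obs}=\langle\cos\alpha,\sin\alpha\rangle$. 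Hence $\nabla v_{max}$ is collinear with $\vec{u}_{obs}$, and scaling by the gain $K_{rep}$ produces precisely $\vec{\mathcal{F}}_{rep}=K_{rep}\,\tfrac{\partial v_{max}}{\partial d}\,\vec{u}_{obs}$, a force that by construction steers the robot along the direction of steepest increase of the admissible safe velocity.

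The second step is the evaluation of $\partial v_{max}/\partial d$ from the closed form in Proposition~\ref{prop:vmax}. Writing $S(d)=\big(\tfrac{A}{b}+1\big)\epsilon^2+\tfrac{V^2}{b^2}+\sqrt{2}\,\tfrac{d}{b}$ we have $v_{max}=b\sqrt{S(d)}-V-\big(\tfrac{A}{b}+1\big)\epsilon b$; since only the last summand of $S$ depends on $d$, $S'(d)=\sqrt{2}/b$, and the chain rule yields $\partial v_{max}/\partial d=\tfrac{b}{2\sqrt{S}}\cdot\tfrac{\sqrt{2}}{b}$, which after absorbing the constant scalar into $K_{rep}$ is the stated
\[
\frac{\partial v_{max}}{\partial d}\;=\;\frac{1}{\sqrt{\big(\tfrac{A}{b}+1\big)\epsilon^2+\tfrac{V^2}{b^2}+\sqrt{2}\,\tfrac{d}{b}}}.
\]
I would then remark that this derivative is positive and strictly decreasing in $d$ (so the field grows sharper as the robot nears an obstacle) and stays bounded as $d\to 0^+$, its supremum $1/\sqrt{(\tfrac{A}{b}+1)\epsilon^2+V^2/b^2}$ being the natural value for the clamp $\nabla_{v_{max}}^{(\max)}$.

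The differentiation above is routine; the substantive part, and the main obstacle, is justifying that following $\vec{u}_{obs}$ scaled by $\partial v_{max}/\partial d$ actually ``drives the robot to $v_{max}$.'' Two points need care. First, $v_{max}$ is a reactive quantity: the closest obstacle point, hence $d(\cdot)$ and $\vec{u}_{obs}$, can change every control cycle and with obstacle motion, so the argument is quasi‑static and must be stated as an instantaneous increase of the achievable safe speed within the cyclic control envelope of Fig.~\ref{fig:controlenvelope}. Second, $d(\cdot)$ is only piecewise smooth: where the nearest point of an extended obstacle jumps (for instance at the corners of the square obstacle of Fig.~\ref{fig:grad_square}) $\nabla d$ is merely a one‑sided (sub)gradient. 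I would therefore prove the identity on the open, full‑measure region where $d$ is differentiable, observe that $\vec{u}_{obs}$ is well defined there, and handle the kink set separately, checking that the retained one‑sided direction still points toward locally larger $v_{max}$. A last bookkeeping step is to fix the sign convention for $\alpha$ so that $K_{rep}>0$ makes $\vec{\mathcal{F}}_{rep}$ push the robot toward increasing $d$; with that fixed, the proposition follows from the two computations above.
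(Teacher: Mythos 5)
Your proposal is correct and follows essentially the same route as the paper: take $\vec{\mathcal{F}}_{rep}=K_{rep}\nabla v_{max}$, apply the chain rule through $d=\sqrt{(x-x_o)^2+(y-y_o)^2}$ so that $\nabla d$ is the unit vector $\vec{u}_{obs}=\langle\cos\alpha,\sin\alpha\rangle$, and read off $\partial v_{max}/\partial d$ from Proposition~\ref{prop:vmax}. You are in fact slightly more careful than the paper, since you carry out the differentiation explicitly and note that the resulting constant factor $\tfrac{\sqrt{2}}{2}$ (which the paper silently omits) must be absorbed into $K_{rep}$, and you flag the piecewise-smoothness of $d$, which the paper does not discuss.
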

 
\begin{proof}
The repulsion field is proportional to the gradient of maximum safe distance $v_{max}$ such that,
\begin{align*}
\vec{\mathcal{F}}_{rep} = & K_{rep} \bigtriangledown v_{max} \\
= & \langle \frac{\partial v_{max}}{\partial x},  \frac{\partial v_{max}}{\partial y}\rangle \\
= & \frac{1}{\sqrt{\Big(\frac{A}{b} + 1\Big) \epsilon^2 + \frac{V^2}{b^2} + \sqrt{2}\frac{d}{b}}} \langle \frac{x - x_o}{d}, \frac{y - y_o}{d} \rangle
\end{align*}
However,
\begin{align*}
\cos \alpha = & \frac{x - x_o}{d} \\
\sin \alpha = & \frac{y - y_o}{d}
\end{align*}
Since $\vec{u}_{obs} = \langle \cos \alpha, \sin \alpha \rangle$,
\begin{align*}
\vec{\mathcal{F}}_{rep} = & K_{rep} \frac{\partial v_{max}}{\partial d} \vec{u}_{obs}
\end{align*}
\end{proof}

\begin{figure}[t]
    \centering
    \includegraphics[width=0.9\linewidth]{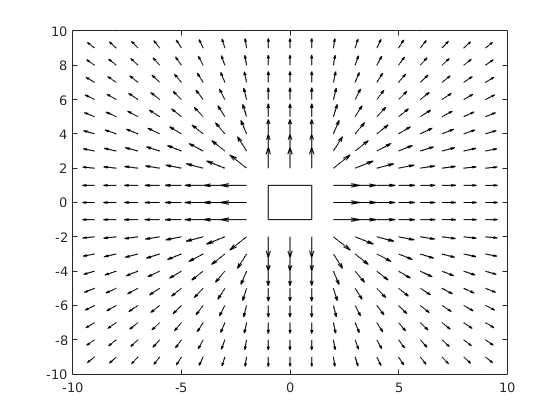}
    \caption{Repulsion field for a squared moving obstacle (V=0.75m/s). This force was sampled for positions in the range $\langle [-10,10], [-10,10] \rangle$ considering the following parameters: $v=1m/s$, $A=b=0.3m/s^2$, $\epsilon=0.1s$, $K_{rep}=1$ and $\bigtriangledown_{v_{max}}^{(\max)} = 1$.}
    \label{fig:grad_square}
\end{figure}

Finally, the sum of those two fields will define the desired heading that is used to calculate the desired angular velocity. 
\begin{align}
\vec{\mathcal{F}} = & \vec{\mathcal{F}}_{att} + \vec{\mathcal{F}}_{rep}
\end{align}
The Fig.~\ref{fig:potfield} shows a potential $\vec{\mathcal{F}}$ for a moving obstacle. 

\begin{figure}[t]
    \centering
    \includegraphics[width=0.9\linewidth]{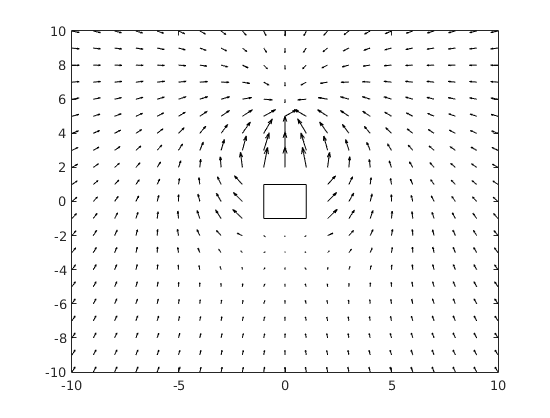}
    \caption{Potential field for a squared moving obstacle (V=0.75m/s) for an attraction field towards the position $\langle 0,5 \rangle$. This force was sampled for positions in the range $\langle [-10,10], [-10,10] \rangle$ considering the following parameters: $v=1m/s$, $A=b=0.3m/s^2$, $\epsilon=0.1s$, $K_{rep}=1$ and $K_{att}=0.25$.}
    \label{fig:potfield}
\end{figure}

Finally, the desired angular velocity $\omega^*$ is calculated based on the actual robot heading $\theta$ and the potential field $\vec{\mathcal{F}}$ such that,
\begin{equation}
\omega^* = \begin{cases}
|\vec{\mathcal{F}}| \big(\theta - \angle \vec{\mathcal{F}}\big) & -\Omega \leq \omega^* \leq \Omega \\
-\Omega & \omega^* < -\Omega \\
\Omega & \omega^* > \Omega.
\end{cases}
\end{equation}

In summary, the SafeGuardPF approach finds a pair of desired translational and angular velocities $\langle v^*, \omega^* \rangle$ which depends only on current robot states, i.e. speed and relative distance to the closest obstacle point. Furthermore, the desired translational velocity $v^*$ ensures that the robot will not collide to any obstacle moving up to velocity $V$, or it will be at rest. Therefore, this algorithm is efficient, scalable and proven safe.

\section{Simulation and Experimental Results}\label{sec:results}

The mobile robots used in simulation and experiments are Pioneer P3-AT robots\footnote{http://www.mobilerobots.com/ResearchRobots/P3AT.aspx, retrieved 09-12-2016.}, which has a development kit called the Pioneer SDK\footnote{http://www.mobilerobots.com/Software.aspx, retrieved 09-12-2016.}. The SafeGuardPF was implemented in a custom C++ application using the libraries ARIA\footnote{http://www.mobilerobots.com/Software/ARIA.aspx, retrieved 09-12-2016.} and ARNL\footnote{http://www.mobilerobots.com/Software/NavigationSoftware.aspx, retrieved 05-18-2016.}. The ARIA library brings an interface to control and to receive data from MobileSim\footnote{http://www.mobilerobots.com/Software/MobileSim.aspx, retrieved 09-12-2016.} accessible via a TCP port and is the foundation for all other software libraries in the SDK such as the ARNL. The MobileSim permits to simulate all current and legacy models of MobileRobots/ActivMedia mobile robots including the Pioneer 3 AT. The ARNL Navigation library\footnote{http://www.mobilerobots.com/Software/NavigationSoftware.aspx, retrieved 09-12-2016.} provides a MobileRobots' proprietary navigation technology that is reliable, high quality and highly configurable and implements an intelligent localization capabilities to the robot. Different localization methods are available for different sensors such as LIDAR, Sonar and GPS. Finally, a Pioneer SDK implementation can be controlled by the MobileEye\footnote{http://www.mobilerobots.com/Software/MobileEyes.aspx, retrieved 09-12-2016.}, which is a graphical interface that can send commands and read data from ARIA and ARNL to show the sensor readings and trajectories. Therefore, each controller is implemented in a C++ custom application that is connected via a TCP/IP port to a MobileEye application and runs on Linux Computers. The simulation testbed consists of a MobileSim which simulates each robot dynamics and also runs on Linux Computers, where each robot controller connects via a TCP port. In experimental results, the MobileSim is substituted by an actual robot. The Fig.~\ref{fig:bbr} illustrate graphically the SafeGuardPF implementation used in the simulations and experiments. 

\begin{figure}[t]
    \centering
    \includegraphics[width=0.9\linewidth]{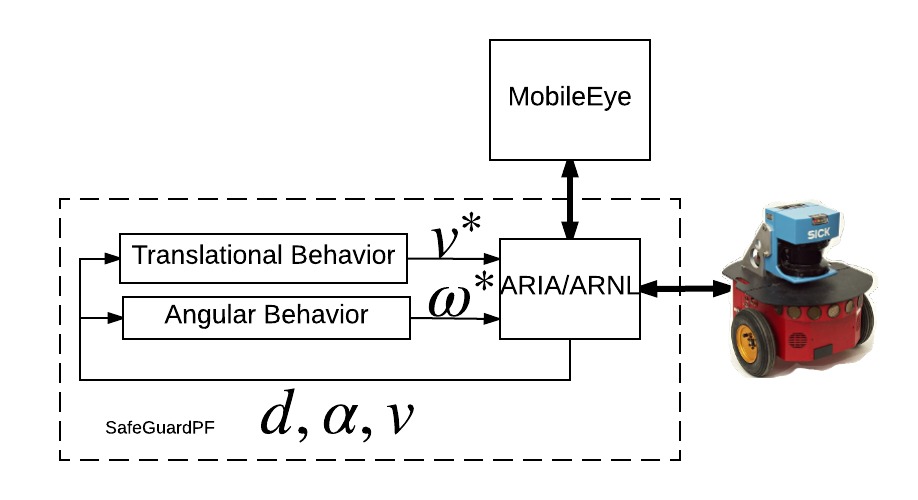}
    \caption{The SafeGuardPF implementation as Behavior-Based Robotics approach using Pioneer SDK development kit. The ARIA and ARNL libraries coordinate the behaviors and connect them to the robot. The MobileEye provides a graphical interface that allows monitoring and sending commands to the robot controller.}
    \label{fig:bbr}
\end{figure}

The simulated and real robots were equipped with laser sensor SICK LMS-500. This sensor was used to detect obstacles and for indoor localization. The Pioneer P3-AT robots shape can be approximated to a circle; thus, the distance to the closest obstacle point is the minimum measured distance minus the robot radius. 

\begin{example}[Simulation with 2 Pioneer robots]\label{ex:simrobot}
In Fig.~\ref{fig:Simulation} are shown two Pioneer P3-AT robots (R1 and R2) with lasers, which are heading towards their destinations, while avoiding obstacles (as a demonstration of the implemented SafeGuardPF approach). The origin of the arrow shows the robot's location, the length of the arrow - value of robot's speed and arrow's direction - the direction of the velocity vector. The shape and dimensions (roughly 4.5 by 4.5 meters) of the map duplicate the shape and dimensions of the arena for experimental implementation on real robots. The potential fields coefficients used were $K_{att} = 0.04$ and $K_{rep} = 0.09$.

In Fig.~\ref{fig:Distance} on the top is shown the distance to the closest obstacle (from a laser sensor) and on the bottom are shown translational and angular velocities. In this scenario robot 1 had initial position close to a wall (Fig.~\ref{fig:Simulation}) and its speed was limited because of the presence of a close obstacle (but, in fact, this assumption can be improved, as we treat all obstacles as obstacles which may have a maximum speed \textit{V}, but in reality walls are usually not moving and as a future work we plan to distinguish movable and non-movable obstacles to be less conservative). As the robot moves and become farther from the wall, it is able to continuously increase its speed until it notices robot 2. As robot 1 approached to robot 2, it had to decrease its speed to prevent the collision by coming to a complete stop even in the worst case scenario (if the second robot will be approaching robot 1 with velocity \textit{V}). At the same time robot 1 was looking for a better direction to go and decided to turn left. Because this direction was a good choice, the robot 1 was moving away from the robot 2 and as the distance to the closest obstacle was increasing, the robot 1 could increase its velocity and continue its movement towards the destination.
\end{example}

\begin{example}[Implementation on a real robot]\label{ex:realrobot}
During a demonstration of collision avoidance on a real Pioneer P3-AT robot, the robot had to move towards its destination, while avoiding an unpredictable moving obstacle (represented by a human), as shown in Fig.~\ref{fig:Position_real}. The coefficients were set as $K_{att} = 0.04$ and $K_{rep} = 0.12$ because in the real experiment the rotational velocity realization was less precise than during the simulation.

In this scenario a human was approaching the robot on the right side, almost perpendicular to its movement. As the robot noticed the human, it had to decrease its speed (we can see it as a smaller length of the velocity vector in coordinates around 3000 in X and 1500 in Y in Fig.~\ref{fig:Position_real}). The robot obeyed the rule that it had to always have enough distance to be able to come to a complete stop, if the human would be approaching the robot directly with velocity \textit{V}. As the human was passing by, at first the robot decided to turn left (see Fig.~\ref{fig:Distance_real}, around 2 sec), but as the human completely passed by and he was noticed by a laser sensor to be on the left of the robot, the robot decided to turn right. After that the robot continued its path towards its destination. As it was approaching the destination, it noticed that the walls were close and started decreasing its speed to be safe for sure. Then the human was noticed again in the direction of the destination and robot had to greatly decelerate again (see Fig.~\ref{fig:Distance_real}, around 6 sec) and finish its path when the human left the destination.
\end{example}

The experiments (in simulation and on real robots) were also done with sonar sensors only, and robots were always able to successfully prevent collisions, but localization precision was less accurate (it affected how close robots approached their destinations).

\begin{figure}[t]
    \centering
    \includegraphics[width=0.9\linewidth]{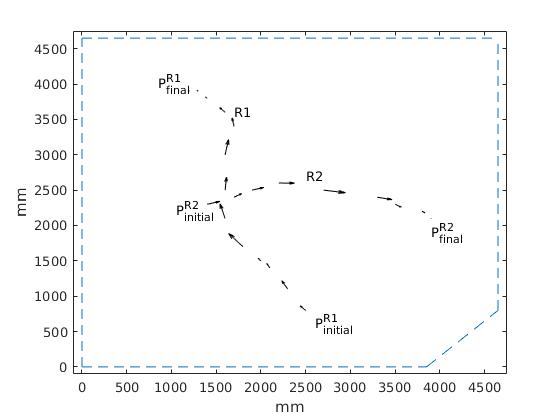}
    \caption{Simulation of collision avoidance for 2 Pioneer robots with lasers.}
    \label{fig:Simulation}
\end{figure}

\begin{figure}[t]
    \centering
    \includegraphics[width=0.9\linewidth]{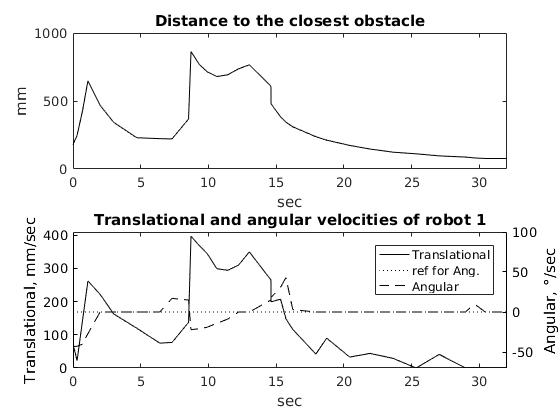}
    \caption{Distance to the closest obstacle; translational and angular velocities of robot 1.}
    \label{fig:Distance}
\end{figure}

\begin{figure}[t]
    \centering
    \includegraphics[width=0.9\linewidth]{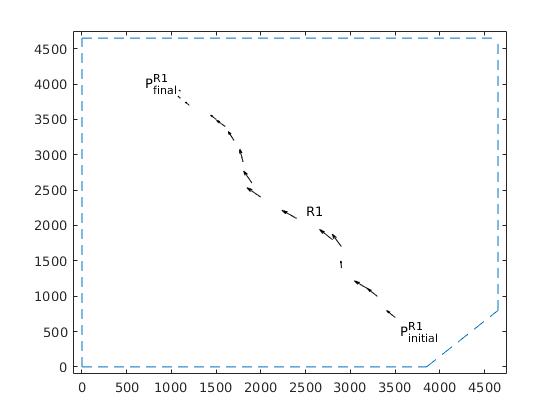}
    \caption{Demonstration of collision avoidance for a Pioneer with a laser and an unpredictable obstacle (human).}
    \label{fig:Position_real}
\end{figure}

\begin{figure}[t]
    \centering
    \includegraphics[width=0.9\linewidth]{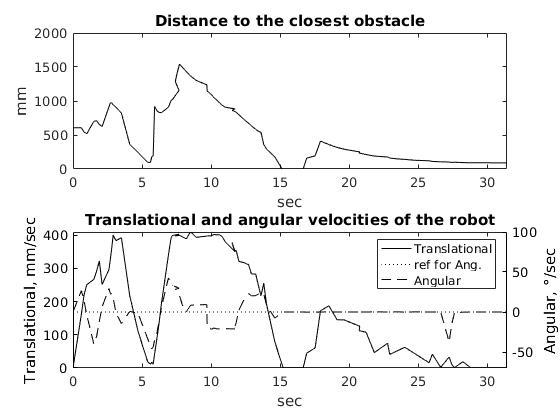}
    \caption{Distance to the closest obstacle; translational and angular velocities of the robot 1 during implementation on a real robot with an unpredictable obstacle (a human).}
    \label{fig:Distance_real}
\end{figure}

%\section{Discussion}\label{sec:discussion}
\subsection{Discussion}\label{sec:discussion}

As a practical result from the Section \ref{sec:results}, we may infer that the proposed approach is provable safe and practically implementable and hence might be used in safety critical environments (for example, in scenarios with human workers in a warehouse or a manufactory).  The safety is ensured because the desired translational velocity $v^*$ is bounded using the safe constraint presented in Equation (27) in Model 2 \cite{mitsch2016formal} which guarantees a passive safety property (see Propositions \ref{prop:vmax} and \ref{prop:vstar}). Hence, the robot will always keep a speed that allows it to stop before colliding with a moving obstacle. Furthermore, it was not assumed to know intentions of the moving obstacle, but only its maximum velocity $V$. Thus, it can prevent collisions with obstacles having unpredictable intentions (e.g. human workers), if the maximum speed of the obstacles does not exceed $V$. Example \ref{ex:realrobot} shows that a real robot crossing with a human could ensure a safe velocity. 

%take into account agents which robot cannot know the future motion such as humans. 

The proposed method is efficient because it is a reactive control system which safe trajectories were found using an offline verification and the desired trajectory $\langle v^*, \omega^* \rangle$ does not require a numerical optimization. Furthermore, only the knowledge about the closest obstacle point is required to ensure the safety; thus, it is scalable because the number of agents in the environment does not affect the computation cost. For instance, Example \ref{ex:simrobot} shows two robots with decentralized control systems (identical to each other) whose safe trajectories were found without knowledge of the other robot's future kinematics. Therefore, at each computation cycle, the control system is able to find short term safe trajectories for a current state using only the current relative distance and its own velocity. This approach is less computationally expensive that an online estimation of the obstacle future kinematics.

\section{Conclusion}\label{sec:conclusion}
This paper presents an efficient and proven safe reactive potential field approach. The algorithm ensures the proved conditions presented in \cite{mitsch2013provably,mitsch2016formal} for ground mobile robots and uses it to find a potential field that leads the robot to maximized translational velocity towards to the destination. Since this potential field is found using only current robot states (i.e. translational velocity $v$, relative distance $d$ and angle $\alpha$ to the closest obstacle point), it does not require costly computation. Furthermore, the safety is ensured for moving obstacles with maximum translational velocity $V$ without requiring an online model predictive algorithm. Therefore, the SafeGuardPF approach brings a complex predictive behavior using simple and efficient behaviors to ground mobile robots.

Moreover, there were presented results for simulated and real robots. Those experiments show the emerging behavior even in complex scenarios such as a robot sharing its workspace with a human. Since these behaviors require only the closest obstacle point to find the desired velocities (i.e. $\langle v^*, \omega^* \rangle$), the algorithm is scalable for multi-agent scenarios.

\small
\bibliographystyle{plain}

\bibliography{SafeGuardPF}

\begin{thebibliography}{10}

\bibitem{althoff2012safety}
Daniel Althoff, James~J Kuffner, Dirk Wollherr, and Martin Buss.
\newblock Safety assessment of robot trajectories for navigation in uncertain
  and dynamic environments.
\newblock {\em Autonomous Robots}, 32(3):285--302, 2012.

\bibitem{althoff2014online}
Matthias Althoff and John~M Dolan.
\newblock Online verification of automated road vehicles using reachability
  analysis.
\newblock {\em IEEE Transactions on Robotics}, 30(4):903--918, 2014.

\bibitem{bouraine2012provably}
Sara Bouraine, Thierry Fraichard, and Hassen Salhi.
\newblock Provably safe navigation for mobile robots with limited
  field-of-views in dynamic environments.
\newblock {\em Autonomous Robots}, 32(3):267--283, 2012.

\bibitem{braunl2008embedded}
Thomas Br{\"a}unl.
\newblock {\em Embedded robotics: mobile robot design and applications with
  embedded systems}.
\newblock Springer Science \& Business Media, 2008.

\bibitem{budhraja2016approach}
Akshit Budhraja, Roopak Srivastava, and PM~Pradhan.
\newblock An approach to adaptive swarm surveillance using social potential
  fields.
\newblock In {\em 2016 IEEE 6th International Conference on Advanced Computing
  (IACC)}, pages 191--196. IEEE, 2016.

\bibitem{buniyamin2011simple}
N~Buniyamin, WAJ Wan~Ngah, N~Sariff, and Z~Mohamad.
\newblock A simple local path planning algorithm for autonomous mobile robots.
\newblock {\em International journal of systems applications, Engineering \&
  development}, 5(2):151--159, 2011.

\bibitem{duchovn2014path}
Franti{\v{s}}ek Ducho{\v{n}}, Andrej Babinec, Martin Kajan, Peter Be{\v{n}}o,
  Martin Florek, Tom{\'a}{\v{s}} Fico, and Ladislav Juri{\v{s}}ica.
\newblock Path planning with modified a star algorithm for a mobile robot.
\newblock {\em Procedia Engineering}, 96:59--69, 2014.

\bibitem{fraichard2004inevitable}
Thierry Fraichard and Hajime Asama.
\newblock Inevitable collision states—a step towards safer robots?
\newblock {\em Advanced Robotics}, 18(10):1001--1024, 2004.

\bibitem{ge2002dynamic}
Shuzhi~S. Ge and Yun~J Cui.
\newblock Dynamic motion planning for mobile robots using potential field
  method.
\newblock {\em Autonomous Robots}, 13(3):207--222, 2002.

\bibitem{ge2000new}
Shuzhi~Sam Ge and Yan~Juan Cui.
\newblock New potential functions for mobile robot path planning.
\newblock {\em IEEE Transactions on robotics and automation}, 16(5):615--620,
  2000.

\bibitem{hamani2012mobile}
M~Hamani and A~Hassam.
\newblock Mobile robot navigation in unknown environment using improved apf
  method.
\newblock In {\em the 13th international conference in unknown environment
  using improved APF method}, pages 0875--1812, 2012.

\bibitem{hess2014formal}
Daniel He{\ss}, Matthias Althoff, and Thomas Sattel.
\newblock Formal verification of maneuver automata for parameterized motion
  primitives.
\newblock In {\em 2014 IEEE/RSJ International Conference on Intelligent Robots
  and Systems}, pages 1474--1481. IEEE, 2014.

\bibitem{khatib1986real}
Oussama Khatib.
\newblock Real-time obstacle avoidance for manipulators and mobile robots.
\newblock {\em The international journal of robotics research}, 5(1):90--98,
  1986.

\bibitem{lin2014mission}
Hai Lin.
\newblock Mission accomplished: An introduction to formal methods in mobile
  robot motion planning and control.
\newblock {\em Unmanned Systems}, 2(02):201--216, 2014.

\bibitem{macek2009towards}
Kristijan Macek, Dizan Alejandro~Vasquez Govea, Thierry Fraichard, and Roland
  Siegwart.
\newblock Towards safe vehicle navigation in dynamic urban scenarios.
\newblock {\em Automatika}, 2009.

\bibitem{matoui2015local}
Fethi Matoui, Boumedyen Boussaid, and Mohamed~Naceur Abdelkrim.
\newblock Local minimum solution for the potential field method in multiple
  robot motion planning task.
\newblock In {\em 2015 16th International Conference on Sciences and Techniques
  of Automatic Control and Computer Engineering (STA)}, pages 452--457. IEEE,
  2015.

\bibitem{minguez2006abstracting}
Javier Minguez, Luis Montano, and Jos{\'e} Santos-Victor.
\newblock Abstracting vehicle shape and kinematic constraints from obstacle
  avoidance methods.
\newblock {\em Autonomous Robots}, 20(1):43--59, 2006.

\bibitem{mitsch2013provably}
Stefan Mitsch, Khalil Ghorbal, and Andr{\'e} Platzer.
\newblock On provably safe obstacle avoidance for autonomous robotic ground
  vehicles.
\newblock In {\em Robotics: Science and Systems}, 2013.

\bibitem{mitsch2016formal}
Stefan Mitsch, Khalil Ghorbal, David Vogelbacher, and Andr{\'e} Platzer.
\newblock Formal verification of obstacle avoidance and navigation of ground
  robots.
\newblock {\em arXiv preprint arXiv:1605.00604}, 2016.

\bibitem{platzer2010logical}
Andr{\'e} Platzer.
\newblock {\em Logical analysis of hybrid systems: proving theorems for complex
  dynamics}.
\newblock Springer Science \& Business Media, 2010.

\bibitem{raja2012optimal}
P~Raja and S~Pugazhenthi.
\newblock Optimal path planning of mobile robots: A review.
\newblock {\em International Journal of Physical Sciences}, 7(9):1314--1320,
  2012.

\bibitem{savkin2015safe}
Andrey~V Savkin, Alexey~S Matveev, Michael Hoy, and Chao Wang.
\newblock {\em Safe Robot Navigation Among Moving and Steady Obstacles}.
\newblock Butterworth-Heinemann, 2015.

\bibitem{vseda2007roadmap}
Milo{\v{s}} {\v{S}}eda.
\newblock Roadmap methods vs. cell decomposition in robot motion planning.
\newblock In {\em Proceedings of the 6th WSEAS International Conference on
  Signal Processing, Robotics and Automation}, pages 127--132. World Scientific
  and Engineering Academy and Society (WSEAS), 2007.

\bibitem{shiller2001motion}
Zvi Shiller, Frederic Large, and Sepanta Sekhavat.
\newblock Motion planning in dynamic environments: Obstacles moving along
  arbitrary trajectories.
\newblock In {\em Robotics and Automation, 2001. Proceedings 2001 ICRA. IEEE
  International Conference on}, volume~4, pages 3716--3721. IEEE, 2001.

\bibitem{sun2016flight}
Fanrong Sun and Songchen Han.
\newblock A flight path planning method based on improved artificial potential
  field.
\newblock In {\em 2016 International Conference on Computer, Information and
  Telecommunication Systems (CITS)}, pages 1--5. IEEE, 2016.

\bibitem{thrun1997dynamic}
D~Fox W Burgard~S Thrun, D~Fox, and W~Burgard.
\newblock The dynamic window approach to collision avoidance.
\newblock {\em IEEE Transactions on Robotics and Automation}, 4:1, 1997.

\bibitem{vadakkepat2000evolutionary}
Prahlad Vadakkepat, Kay~Chen Tan, and Wang Ming-Liang.
\newblock Evolutionary artificial potential fields and their application in
  real time robot path planning.
\newblock In {\em Evolutionary Computation, 2000. Proceedings of the 2000
  Congress on}, volume~1, pages 256--263. IEEE, 2000.

\bibitem{yin2009new}
Lu~Yin, Yixin Yin, and Cheng-Jian Lin.
\newblock A new potential field method for mobile robot path planning in the
  dynamic environments.
\newblock {\em Asian Journal of Control}, 11(2):214--225, 2009.

\end{thebibliography}

\end{document}